\documentclass[conference]{IEEEtran}
\usepackage{times}
\usepackage{multicol}
\usepackage[bookmarks=true]{hyperref}
\usepackage{amsmath}
\usepackage{amssymb}
\usepackage{amsthm}
\usepackage{stmaryrd}
\usepackage{graphicx}
\usepackage{tikz}
\usepackage{setspace}
\usepackage{mathrsfs}
\usepackage{bm}
\usepackage[english]{babel}
\usepackage{blindtext}
\usepackage{epstopdf}
\usepackage{siunitx}
\usepackage{booktabs}

\newtheorem{definition}{Definition}
\newtheorem{proposition}{Proposition}

% COMMENT
\newcommand{\comment}[1]{}

\newcommand{\bbR}{\mathbb{R}}

\newcommand{\calA}{{\cal A}}

\newcommand{\calF}{{\cal F}}

\newcommand{\calP}{{\cal P}}

\newcommand{\beg}{\rm{beg}}
\newcommand{\Abeg}{\rm{Abeg}}
\newcommand{\Bbeg}{\rm{Bbeg}}
\newcommand{\Aend}{\rm{Aend}}
\newcommand{\Bend}{\rm{Bend}}
\newcommand{\A}{\rm{A}}
\newcommand{\B}{\rm{B}}
\newcommand{\C}{\rm{C}}
\newcommand{\MaAS}{\rm{MaAS}}
\newcommand{\MiAS}{\rm{MaAS}}

\renewcommand{\vec}[1]{\mathbf{#1}}

\usepackage[ruled, linesnumbered]{algorithm2e}
% for algorithm2e
\let\oldnl\nl% Store \nl in \oldnl
\newcommand{\nonl}{\renewcommand{\nl}{\let\nl\oldnl}}% Remove line % number for one line
\setlength{\algomargin}{2.5em}
% \SetStartEndCondition{ }{}{}%
\SetKwIF{If}{ElseIf}{Else}{if}{ then}{elif}{else}{}%
\SetKwFor{For}{for}{ do}{}%
\SetKwFor{ForEach}{foreach}{ do}{}%
\SetKwInOut{Input}{Input}%
\SetKwInOut{Output}{Output}%
\AlgoDontDisplayBlockMarkers%
\SetAlgoNoEnd%
\SetAlgoNoLine%
\DontPrintSemicolon

\SetKwFunction{bridge}{bridge}
\SetKwFunction{extend}{extend}
\SetKwFunction{extendbw}{extend}
\SetKwFunction{integrate}{integrate} 
% functions in algorithms

% change tt font

% text in math

%footnote

\begin{document}
\title{\LARGE On the Structure of the Time-Optimal Path
  Parameterization Problem with Third-Order Constraints}

\author{Hung Pham, Quang-Cuong Pham\\ \thanks{Hung Pham and
    Quang-Cuong Pham are with Air Traffic Management Research
    Institute (ATMRI) and Singapore Centre for 3D Printing (SG3DP),
    School of Mechanical and Aerospace Engineering, Nanyang
    Technological University, Singapore. This work was partially
    supported by grant ATMRI:2014-R6-PHAM awarded by NTU and the Civil
    Aviation Authority of Singapore to the last author.}  }

\date{\today} 
\maketitle
\thispagestyle{empty}
\pagestyle{empty} 

\begin{abstract}
  Finding the Time-Optimal Parameterization of a Path (TOPP) subject
  to second-order constraints (e.g. acceleration, torque, contact
  stability, etc.) is an important and well-studied problem in
  robotics. In comparison, TOPP subject to third-order constraints
  (e.g. jerk, torque rate, etc.) has received far less attention and
  remains largely open. In this paper, we investigate the structure of
  the TOPP problem with third-order constraints. In particular, we
  identify two major difficulties: (i) how to smoothly connect optimal
  profiles, and (ii) how to address singularities, which stop profile
  integration prematurely. We propose a new algorithm, TOPP3, which
  addresses these two difficulties and thereby constitutes an
  important milestone towards an efficient computational solution to
  TOPP with third-order constraints.
\end{abstract}

\section{Introduction}
Given a robot and a smooth path in the robot's configuration space,
the problem of finding the Time-Optimal Parameterization of that Path
(TOPP) with second-order constraints (e.g. bounds on acceleration,
torques, contact stability) is an important and well-studied in
robotics. An efficient algorithm to solve this problem was first
proposed in the 1980's~\cite{BobX85ijrr,SM86tac} and has been
continuously perfected since then, see~\cite{Pha14tro} for a
historical review. The algorithm has also been extended to handle a
wide range of problems, from manipulators subject to torque
bounds~\cite{BobX85ijrr,SL92jdsmc} to vehicles or legged robots
subject to balance
constraints~\cite{SG91tra,PN12humanoids,PS15tmech,caron2015leveraging},
to kinodynamic motion planning~\cite{pham2017avp}, etc.

According to Pontryagin's Maximum Principle, time-optimal trajectories
are bang-bang, which implies instantaneous switches in the
second-order quantities that are constrained. For instance,
time-optimal trajectories with acceleration constraints will involve
acceleration switches, which in turn implies infinite jerk. This is
one of the drawbacks of TOPP with second-order constraints.

To address this issue, one can consider TOPP with third-order
constraints. In many industrial applications, constraining third-order
quantities such as jerk, torque rate or force rate is also part of the
problem definition~\cite{tarkiainen1993time}. For instance, a
hydraulic actuator exerts forces by forcing oil to travel through its
piston and gets compressed, which results in the actuating force rate
being restricted. As another example, DC electric motors have bounds
on input voltages, which translate directly to torque rate
constraints.

\subsection{Related works} 
\label{sub:related_works}

While TOPP with second-order constraints can essentially be considered
as solved~\cite{Pha14tro}, the structure of TOPP with third-order
constraints is much less well understood. In the sequel, we survey
some of the attempts to address TOPP with third-order constraints.

\subsubsection{Numerical integration methods}

TOPP with second-order constraints can be efficiently solved by the
numerical integration method: from Pontryagin's Maximum Principle, the
optimal velocity profile in the ($s,\dot s$) plane [$s(t)$ denotes the
position on the path at a given time instant $t$] is ``bang-bang'' and
can thus be found by integrating successively the maximum and minimum
accelerations~$\ddot s$. In~\cite{tarkiainen1993time}, Tarkiainen and
Shiller pioneered the extension this approach to third-order
constraints. Here, one integrates profiles in the \emph{3D space}
($s,\dot s, \ddot s$), following successively maximum, minimum and
maximum jerk~$\dddot s$. As discussed later in
Section~\ref{sec:prelim}, there are two main difficulties: (i)
contrary to the 2D case, minimum and maximum profiles in the
($s,\dot s, \ddot s$) space do not generally intersect each other,
(ii) there are \emph{singularities} that stop profile integration
prematurely. To address (i), the authors proposed to connect the
initial maximum jerk profile to the final maximum jerk profile by (a)
stepping along the initial profile, (b) at each step, integrate a
minimum jerk profile and check whether that profile connects with the
final profile. This procedure corresponds to a Single Shooting method,
which is known to be sensitive to initial condition
\cite{stoer2013introduction}.  Regarding~(ii), the authors did not
propose a method to overcome singularities; instead, they suggested to
modify the original path to avoid them. However, this workaround is
questionable as in practice, paths usually contain a large number of
singularities~\cite{Pha14tro}.

In a recent work~\cite{mattmuller2009calculating}, Mattmuller and
Gilser proposed an approximate method to compute near-optimal
profiles: they introduce ``split points'' artificially to ``guide''
the profiles away from singularities.  While their method presents the
advantage of simplicity, it is sub-optimal and does not help
understanding the structure of the true time-optimal solutions.

\subsubsection{Optimization-based methods}

TOPP with second-order constraints can also be solved robustly (albeit
at the expense of computation speed~\cite{Pha14tro}) using convex
optimization~\cite{verscheure2008practical,Hau14ijrr}. While we are
not aware of any result on formulating TOPP with third-order
constraints as a convex optimization problem, there have been a number
of works that \emph{approximate} the constraints to make them
convex. For instance, the authors of~\cite{zhang2013practical}
approximated jerk constraint by linear functions. In
\cite{gasparetto2008technique}, the authors replaced jerk constraints
by squared jerk terms in the objective function. In a recent
work~\cite{singh2015class}, the authors proposed to represent the
profiles by a class of $C^{\infty}$ functions. This representation
ensures the continuities of higher-order derivatives such as jerk,
snap, etc. while still allowing for efficient solutions via convex
optimization.

In~\cite{costantinescu2000smooth,bharathi2015smooth,liusmooth}, the
authors approximated the optimal profiles by a piece-wise polynomial
in the $(s, \dot s)$ plane. The polynomials are controlled via a
finite set of knot points. The path parameterization problem then
becomes equivalent to an optimization problem where the independent
variables are the coordinates of the knot points. However, this formulation
is non-convex and there is no guarantee that it converges to true
time-optimal solutions.

\subsection{Our contributions}

In this paper, we follow the numerical integration approach and
investigate the structure of the time-optimal solutions. Specifically,
our contribution is threefold:
\begin{enumerate}
\item we propose a Multiple Shooting Method (MSM) to smoothly connect
  two maximum jerk profiles by a minimum jerk profile. This method is
  more efficient and stable than the the one proposed
  in~\cite{tarkiainen1993time};
\item we analyze third-order singularities, which arise very
  frequently and systematically stop profile integration, leading to
  algorithm failure. We propose a method to address such
  singularities;
\item based on the above contributions, we implement TOPP3, which
  solves TOPP with third-order constraints efficiently and yields
  true optimal solutions in a number of situations.
\end{enumerate}

For simplicity, this paper focuses on pure third-order
constraints. Taking into account first- and second-order constraints
(e.g. bounds on velocity and acceleration) is possible but will
significantly increase the complexity of the exposition.

The remainder of the paper is organized as follows. In
Section~\ref{sec:prelim}, we introduce the basic notations and
formulate the problem. In Section~\ref{sec:connection}, we introduce
our method to smoothly connect two maximum jerk profiles. In
Section~\ref{sec:singularities}, we propose a solution to the problem
of singularities. In Section~\ref{sec:experiments}, we present the
numerical experiments. Finally, in Section~\ref{sec:discussion}, we
offer some discussions and directions for future work.

\section{Structure of TOPP with third-order constraints}
\label{sec:prelim}

\subsection{Problem setting}

Consider a $n$ dof robotic system whose configuration is a vector
$\vec q \in \bbR^n$. A geometric path $\calP$ is a mapping $\vec
q(s)_{s\in[0, s_\mathrm{end}]}$ from $[0, s_\mathrm{end}]$ to the
configuration space.  A time-parameterization of the path $\calP$ is
an increasing scalar mapping $s(t)_{t \in [0, T]}$. Differentiating
successively $\vec q(s(t))$ with respect to $t$ yields
\begin{equation}
  \label{eq:kinematics-rel}
    \begin{aligned}
\dot    {\vec q} = &  \vec q_{s}    {\dot s} \\ 
\ddot   {\vec q} = &  \vec q_{ss}   {\dot s}^2 + \vec q_s \ddot s \\
\dddot  {\vec q} = &  \vec q_{sss}  {\dot s}^3 + 3 \vec q_{ss} \dot s \ddot s + \vec q_{s} \dddot s,
    \end{aligned}
\end{equation}
where $\dot \Box$ denotes time-derivative.

From here on, we shall refer to the first, second and third
time-derivatives of the path parameter $s$ as velocity, acceleration
and jerk respectively. The time-derivatives of the configuration $\vec
q$ will be called joint velocity, acceleration and jerk.

% Defining the control variable $u$ as the jerk $\dddot s$, we have
% the following differential equation describing the system
% \begin{equation}
%     \label{eq:ode}
%     \frac{\d{}}{\d t}[s, \dot s, \ddot s]^T
%     = [\dot s, \ddot s, u].
% \end{equation}

The boundary conditions of a TOPP with third-order constraints consist
of configuration velocities and accelerations at both the start and
the goal. Combining with equation~(\ref{eq:kinematics-rel}), one can
compute the corresponding initial velocities and accelerations as
\[
  s_0 = 0, \quad
  \dot s_0 = \frac{\|\vec {v}_{\beg}\|}{\|\vec q_s(0)\|}, \quad
  \ddot s_0 = \frac{\|\vec{a}_{\beg} - \vec q_{ss}(0) \dot s_0^2\|}{ \|\vec
    q_s(0)\|},
\]
where $\vec{v}_{\beg}$ and $\vec{a}_{\beg}$ are the initial joint
velocity and acceleration respectively.  The end conditions
$(s_1, \dot s_1, \ddot s_1)$ can be computed similarly.

We consider third-order constraints of the form
\begin{equation}
\label{eq:third}
    \vec a(s) \dddot s + \vec b(s) \dot s \ddot s + \vec c(s) \dot s^3 
    + \vec d(s) \leq 0,
\end{equation}
where $\vec a(s), \vec b(s), \vec c(s), \vec d(s)$ are $m$-dimensional vectors.

As in the second-order case, the bounds~(\ref{eq:third}) can represent
a wide variety of constraints, from direct jerk bounds to bounds on
torque rate or force rate\,\footnote{In fact, torque rate and force
  rate involve an additional term $\dot s \vec e(s)$, but their
  treatment is not fundamentally different from what is presented in
  this paper.}. For instance, direct jerk bounds
$(\vec j_{\min{}} \leq \dddot{\vec q} \leq \vec j_{\max{}})$ can be
accommodated by setting $\vec a$, $\vec b$, $\vec c$, $\vec d$ as
follows
\begin{align*}
  &\vec a(s) = \begin{pmatrix} \vec q_s(s) \\ -\vec
    q_s(s) \end{pmatrix},&
  &\vec b(s) = \begin{pmatrix}
    3\vec q_{ss}(s) \\ -3\vec q_{ss}(s) \end{pmatrix},&\\
  &\vec c(s) = \begin{pmatrix} \vec q_{sss}(s) \\ -\vec
    q_{sss}(s) \end{pmatrix},&
&\vec d(s) = \begin{pmatrix} -\vec
    j_{\max{}} \\ \vec j_{\min{}} \end{pmatrix}.&
\end{align*}

Similar to the classical TOPP algorithm with second order
constraints~\cite{BobX85ijrr,Pha14tro}, one can next define, at any
state $(s, \dot s, \ddot s)$, the minimum jerk
$\gamma(s, \dot s, \ddot s)$ and maximum jerk
$\eta(s, \dot s, \ddot s)$ as follows
\begin{equation}
\label{eq:controls}
\begin{aligned}
  \gamma(s, \dot s, \ddot s):=\max_i\left\{
  \frac{- b_i(s) \dot s \ddot s - c_i(s) \dot s^3 - d_i(s) }{a_i(s)}
  \mid a_i(s) < 0 \right\}, \\
  \eta(s, \dot s, \ddot s):=\min_i\left\{
  \frac{- b_i(s) \dot s \ddot s - c_i(s) \dot s^3 - d_i(s) }{a_i(s)}
  \mid a_i(s) > 0 \right\}. \\
\end{aligned}
\end{equation}
Note that the above definitions neglect the case where some $a_i$ are
zero.  As in the case of second-order constraints, a path position $s$
where at least one of the $a_i(s)$ is zero can trigger a
\emph{singularity}.

A \emph{profile} is is a curve in the $(s, \dot s, \ddot s)$-space
where $s$ is always increasing. A time-parameterization of $\vec q$
corresponds to a profile connecting $(s_0, \dot s_0, \ddot s_0)$ to
$(s_1, \dot s_1, \ddot s_1)$. From the definition of $\gamma$ and
$\eta$, a time-parameterization of $\vec q$ satisfies the
constraints~(\ref{eq:third}) if and only if the jerk $\dddot s$ along
the corresponding profile satisfies $\gamma \leq \dddot s \leq \eta$.

Next, we define a minimum jerk (resp. maximum jerk) profile as a
profile along which $\dddot s = \gamma(s, \dot s, \ddot s)$
(resp. $\dddot s = \eta(s, \dot s, \ddot s)$). From Pontryagin's
Maximum Principle, the time-optimal profile follows successively
maximum and minimum jerk profiles~\cite{tarkiainen1993time}. Finding
which profiles to follow and when to switch between two consecutive
profiles is therefore the fundamental issue underlying TOPP with
third-order constraints.

\subsection{Introducing TOPP3}

If there is no singularity, the optimal profile has a max-min-max
structure~\cite{tarkiainen1993time}. Thus, to find the optimal
profile, one can proceed as follows, see Fig.~\ref{fig:max-min-max}
for an illustration
\begin{enumerate}
\item integrate \emph{forward} the maximum jerk profile (following $\eta$) from
  $(s_0, \dot s_0, \ddot s_0)$;
\item integrate \emph{backward} the maximum jerk profile (following
  $\eta$) from $(s_1, \dot s_1, \ddot s_1)$;
\item find a minimum jerk profile that starts from one point on the
  first profile and ends at one point on the second one.
\end{enumerate}
All profiles are integrated until failure unless otherwise specified.

\begin{figure}[htp]
  \centering
  \includegraphics[width=0.35\textwidth]{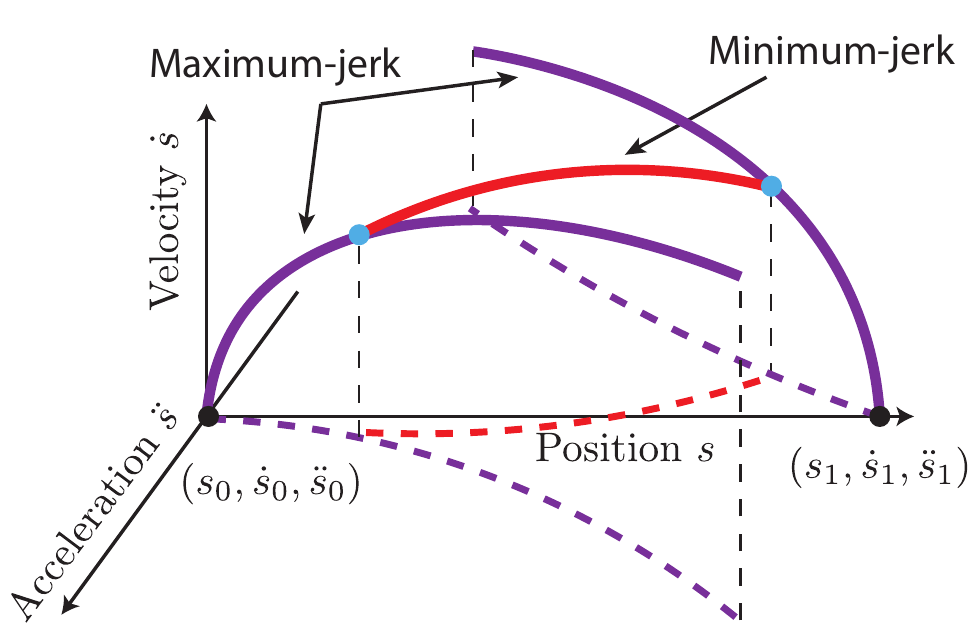}
  \caption{An optimal parametrization with max-min-max structure.}
  \label{fig:max-min-max}
\end{figure}

To find the connecting profile of step (3), Tarkiainen and Shiller
proposed to step along the first profile, integrate following minimum
jerk and check whether it connects to the second profile. This
exhaustive procedure is computationally inefficient and numerically
unstable. We propose, in Section~\ref{sec:connection}, a more
efficient way (\bridge) to perform the connection.

In the presence of singularities ($a_i(s)=0$), the max-min-max
procedure just described will fail, irrespective of the connection
method. This is because the integrated profiles diverge as they
approach a singularity and quickly terminate, see
Fig.~\ref{fig:fields}A. This behavior is the same as in the
second-order case~\cite{Pha14tro}. To address this issue, we propose,
in Section~\ref{sec:singularities}, a method (\extend) that allows
extending profiles \emph{through} singularities. Note that the
structure of the final profile will no longer be max-min-max but
max-min-max-\dots-min-max.

The above discussion can be encapsulated into
Algorithm~\ref{algo:topp3}, which we call \emph{TOPP3}. In the next
sections, we shall discuss in detail the two main components of TOPP3,
namely \bridge and \extend.

\begin{algorithm}[htp]
\label{algo:topp3}
  \caption{TOPP3}
  \Input{Initial and final conditions $(s_0, \dot s_0, \ddot s_0)$,
    $(s_1, \dot s_1, \ddot s_1)$} %
  \Output{The optimal profile $P_\mathrm{opt}$} %
  $P_\mathrm{forward} \leftarrow$ integrate forward from
  $(s_0, \dot s_0, \ddot s_0)$ following maximum jerk until
  termination\; %
  \While{found next singularity}{ $P_\mathrm{forward}\leftarrow$
    \extend{$P_\mathrm{forward}$, singularity}\; }
  $P_\mathrm{backward}\leftarrow$ integrate backward from
  $(s_1, \dot s_1, \ddot s_1)$ following maximum jerk until termination\; %
  \While{found next singularity}{
    $P_\mathrm{backward}\leftarrow$ \extend{$P_\mathrm{backward}$,
      singularity}\; } %
  $P_\mathrm{opt}\leftarrow$
  \bridge{$P_\mathrm{forward}, P_\mathrm{backward}$}\;
\end{algorithm}

\subsection{Remarks}

\subsubsection{Optimality of the connection}

In the absence of singularities, the max-min-max structure described
earlier is only a necessary condition for optimality and not a
sufficient one. Theoretically, to find the true optimal solution, it
suffices to enumerate all candidates having max-min-max structure and
select the one with the shortest time duration.

Next, since the first and last maximum jerk profiles are
well-determined (their starting (resp. ending) conditions are given),
different solutions only differ by the connecting minimum jerk
profile. If there exists one unique minimum jerk profile that can
connect the two maximum jerk profiles, then the corresponding solution
is automatically the optimal one. In practice, we have never
encountered the case when there are more than one possible connecting
minimum jerk profile. Note that Tarkiainen and
Shiller~\cite{tarkiainen1993time} implicitly assumed that there exists
one unique connecting minimum jerk profile.

\subsubsection{Other switching structures}

The TOPP problem with second-order constraints involves three types of
switch points, i.e., points where the optimal profile changes from
minimum acceleration to maximum acceleration and vice-versa
(cf. ~\cite{Pha14tro}): (a) singular (b) discontinuous and (c)
tangent points. These switch points, in general, lie on the Maximum
Velocity Curve (MVC).

We argue that TOPP with third-order constraints similarly involves
singular, discontinuous and tangent switch points, and that those
points, in general, lie on the Maximum and Minimum Acceleration
Surfaces (MaAS and MiAS). 

In Section~\ref{sec:singularities}, we shall discuss the singular
switch points, which are the most commonly encountered and harmful (if
inappropriately treated) switch points. As in the case of second-order
constraints, discontinuous switch points can always be avoided if the
path is sufficiently smooth. Finally, tangent switch points are left for
future work. Let us simply note here that they occur much less frequently
than singular switch points.

Another type of switching structure arises when constraints of
different orders interact. For instance, in the classic TOPP problem,
when an integrated profile hits a first-order constraint (e.g. direct
velocity bound), then it must ``slide'' along the boundary defined by
that first-order constraint, giving rise to a different type of switch
point. We envisage that such switching behavior can also happen when
first, second, and third-order constraints interact. The study of these
interactions is also left for future work.

\section{Connecting profiles using Multiple Shooting}
\label{sec:connection}

Instead of the exhaustive search suggested
in~\cite{tarkiainen1993time}, we propose here a method, termed
\bridge, which is based on Multiple Shooting to find a minimum jerk
profile that can connect two maximum jerk profiles. Specifically,
consider the problem of connecting two maximum jerk profiles $A$ and
$B$. We define a potential solution $\vec x$ as a
$(2N + 4)$-dimensional vector
\[
  \vec x:=[\dot s_0, \ddot s_0,...,\dot s_N, \ddot s_N, s_A, s_B], 
\]
where $\dot s_i, \ddot s_i$, $i\in [0,N]$ are the guessed velocities
and accelerations at the $i$-th point and $(s_A, s_B)$ are the guessed
starting and ending positions on profiles $A$ and $B$ respectively. We
consider a uniform grid, i.e.
\[
    s_i := s_A + \frac{i}{N} (s_B-s_A).
\]

Next, assume that we integrate following minimum jerk from $(s_i,\dot
s_i, \ddot s_i)$ until $s_j$. We define the function $X:\bbR^4
\rightarrow \bbR^2$ as
\[
    X(s_i, \dot s_i, \ddot s_i, s_j):=(\dot s_j, \ddot s_j),
\]
where $\dot s_j, \ddot s_j$ are the corresponding velocity and
acceleration at $s_j$. This allows us to define the defect function by
\begin{equation}
    \label{eq:systems}
    F(\vec x) := 
    \begin{pmatrix}
        X(s_0, \dot s_0, \ddot s_0, s_1) - [\dot s_1, \ddot s_1]^T \\
        X(s_1, \dot s_1, \ddot s_1, s_2) - [\dot s_2, \ddot s_2]^T  \\
        ...\\
        X(s_{N-1}, \dot s_{N-1}, \ddot s_{N-1}, s_{N}) - [\dot s_{N}, \ddot s_{N}]^T \\
        r_{\A}(s_{\A}) - [\dot s_0, \ddot s_0]^T \\
        r_{\B}(s_{\B}) - [\dot s_N, \ddot s_N]^T\\        
    \end{pmatrix},
\end{equation}
where $r_{\A}(s_{\A})$ (resp. $r_{\B}(s_{\B})$) are the velocity and
acceleration on profile $\A$ (resp. $\B$) at position $s_{\A}$
(resp. $s_{\B}$).

The connection problem can now be formulated as
\begin{equation}
  \label{eq:MSM}
  \begin{aligned}
    \mathrm{solve\quad} & F(\vec x) = 0\\
    \mathrm{subject\;to\quad} &
    s_{\Abeg} \leq s_{\A} \leq s_{\Aend}, \\
    & s_{\Bbeg} \leq s_{\B} \leq s_{\Bend},
  \end{aligned}
\end{equation}
where $(s_{\Abeg}, s_{\Aend})$ denote positions of profile
$\A$'s end points (resp. profile $\B$).

To solve (\ref{eq:MSM}), we employ the Newton method. Although the
problem is non-linear and non-convex, the algorithm still converges
very quickly to a solution. Fig.~\ref{fig:multiple-shooting} shows
a particular instance where a solution can be found in 4 iterations.

\begin{figure}[tb]
    \centering
    \includegraphics[width=0.4\textwidth]{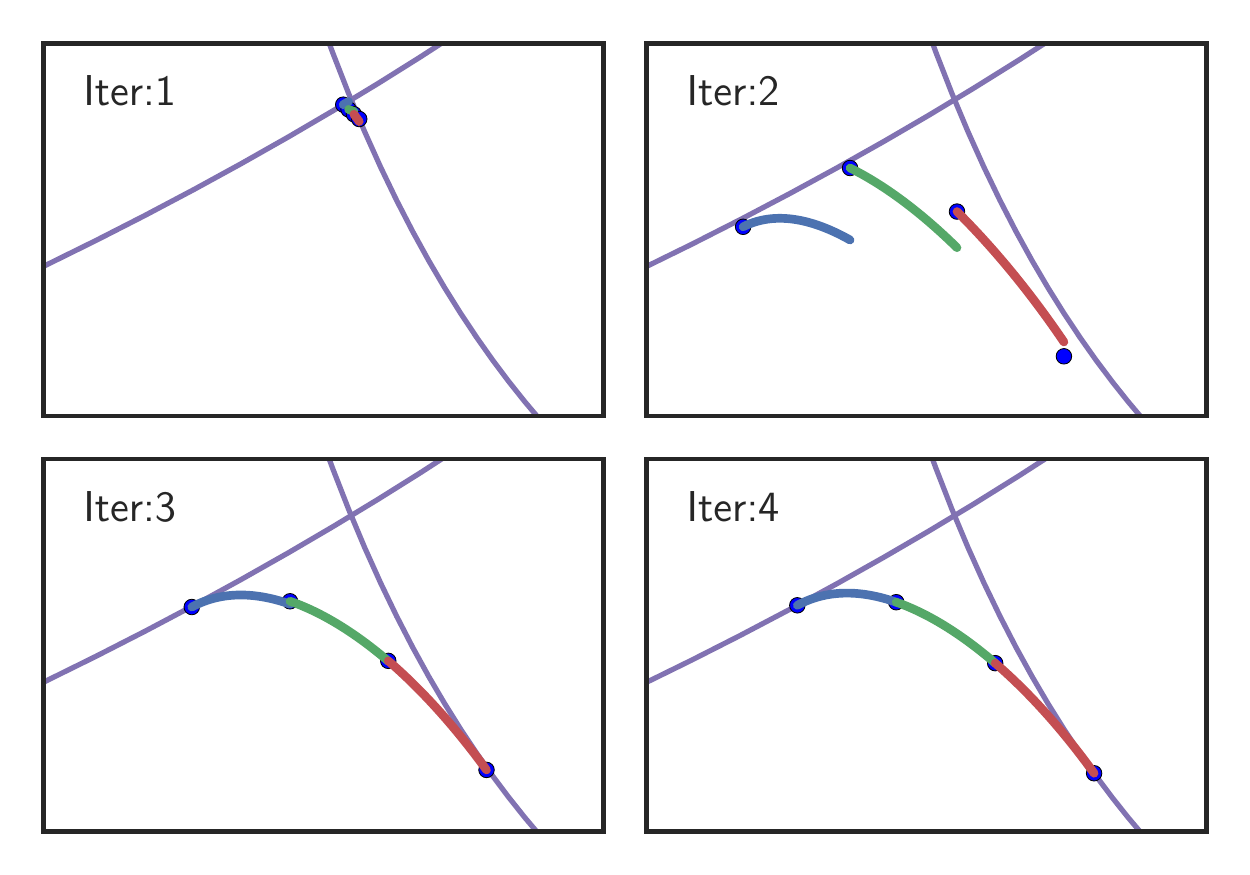}
    \caption{Using MSM to find a minimum jerk profile connecting two
      maximum jerk profiles (purple). Note that the profiles are in 3D
      but projected to 2D for convenience. The number of segments is
      $N=3$. The algorithm took 4 iterations to converge.}
    \label{fig:multiple-shooting}
\end{figure}

Regarding computational cost, we found that it correlates to the
magnitude of jerk bounds. For instance, at $\SI{1000}{rad s^{-3}}$ a
\bridge function call takes only $\SI{10}{ms}$ while at $\SI{100}{rad
  s^{-3}}$, it is approximately $\SI{200}{ms}$. How jerk bounds
precisely affect computation time is left for future investigations.

\section{Characterizing and addressing singularities}
\label{sec:singularities}

Singularities in the third-order case are very similar to those in the
second-order case: (i) they arise at positions $s$ where the minimum
and maximum jerk $\gamma$ and $\eta$ cannot be properly defined
because of the division by $a_k(s)=0$ for some $k$; (ii) they cause
profiles to terminate prematurely, causing algorithm failure (see
Fig.~\ref{fig:fields}). In the sequel, we discuss how to characterize
third-order singularities and how to address singularities, taking
much inspiration from the second-order case~\cite{Pha14tro}.

\begin{figure}[htp]
    \centering
\begin{tikzpicture}
    \node[anchor=south west,inner sep=0] at (0,5)
         {\includegraphics[width=0.35\textwidth]{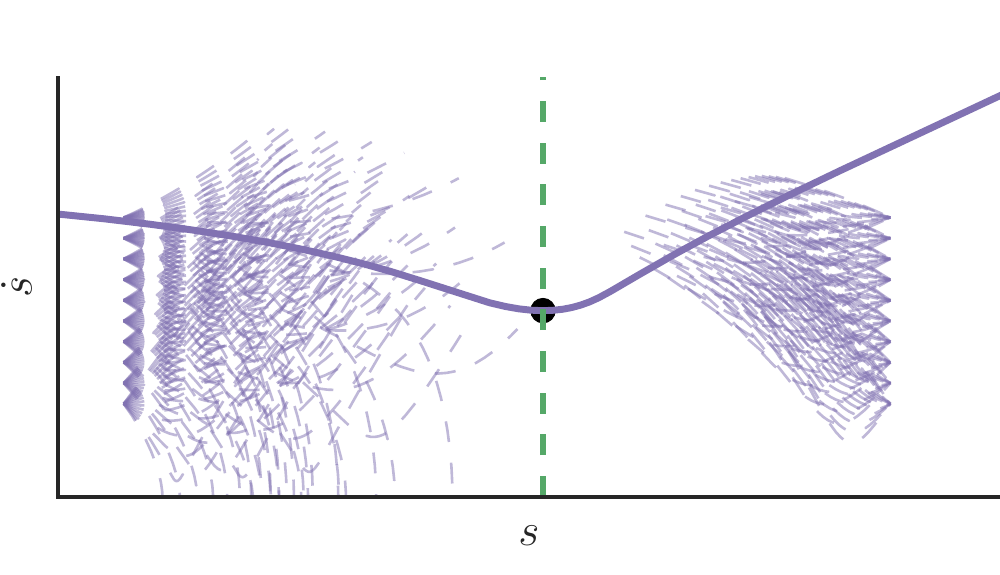}}; 
    \node[anchor=south west,inner sep=0] at (0,0)
         {\includegraphics[width=0.35\textwidth]{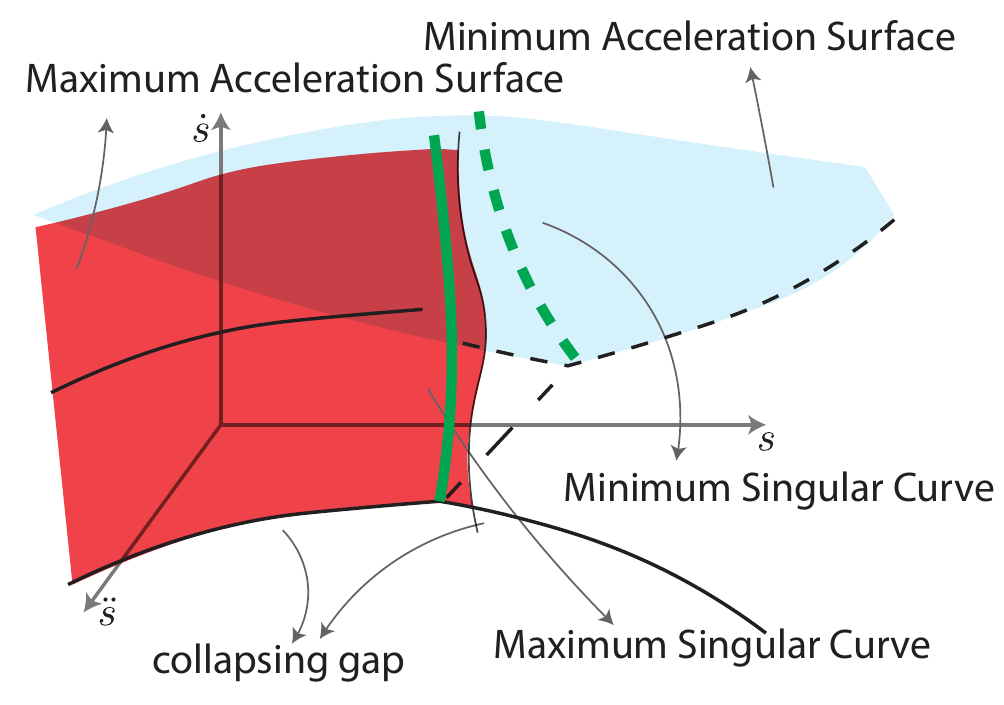}}; 
    \node at (3, 8.4) {\textbf{A}};
    \node at (3, 4.7) {\textbf{B}};
\end{tikzpicture}
\caption{ \textbf{A}: Profiles approaching singularities (purple
  dashed lines) diverge and terminate early. The diverging directions
  include both vertical (along~$\dot s$ axis) and lateral
  (along~$\ddot s$ axis, which is not shown). However, profiles
  (purple solid lines) starting from a point lying between the
  singular curves (green dashed lines) are not affected by the
  singularities.  \textbf{B}: Singularities consist of maximum or
  minimum singular curves lying on the maximum or minimum acceleration
  surfaces (MaAS/MiAS) respectively.  }
    \label{fig:fields}
\end{figure}

%% Singularities appear in TOPP with third-order constraints at positions
%% where there are \emph{degenerate constraints}. More specifically, if
%% $a_k(s^*) = 0$ for the $k$-th constraint then we say that the $k$-th
%% constraint degenerates.  Before discussing the precise condition
%% whereby singularities appear, we first note some observations.

%% The most notable effect of singularities is that integrations
%% following either maximum or minimum jerk tend to terminate quickly as
%% they approach the singularities. As a result, we observe that
%% integrations are \emph{blocked} in the presence of singularities.  As
%% an example, in figure~\ref{fig:diverging-third-order-fields}-A, we
%% integrated following maximum jerk from many different initial
%% condition toward a singularity.  The plot shows that all integrations
%% violate constraints as they approach the singularities.

%% Interestingly, singularities have another, favorable,
%% characteristic. That is integrations starting from a point lying at
%% $s^*$--between the singular curves--are not affected by the
%% singularities at all. Fig.~\ref{fig:diverging-third-order-fields}
%% also demonstrates this characteristic. We will now formalize the
%% definition of singularities and discuss how can one take advantage of
%% this property to perform TOPP with third order constraints.

\subsection{Maximum and Minimum Acceleration Surfaces}

In the second-order case, singularities mostly appear on the Maximum
Velocity Curve
(MVC)~\cite{Pha14tro,pfeiffer1987concept,shiller1992computation}. Here,
we define the Maximum and Minimum Acceleration Surfaces (MaAS and
MiAS), which are the third-order counterparts of the MVC.

\begin{definition}[MaAS/MiAS] Consider the set of feasible accelerations
\[
  \calF(s, \dot s) := \{\ddot s \;|\; \exists \dddot s, \vec a(s)
  \dddot s + \vec b(s) \dot s \ddot s + \vec c(s) \dot s^3 + \vec d(s)
  \leq 0\}.
\]
We define the MaAS and MiAS by
\begin{equation}
\label{eq:MaAS-MiAS}
\begin{aligned}
\MaAS(s, \dot s) &:= \max_{\ddot s \in \calF(s, \dot s)} \ddot s\\
\MiAS(s, \dot s) &:= \min_{\ddot s \in \calF(s, \dot s)} \ddot s. \\
\end{aligned}
\end{equation}
If $\calF(s, \dot s)$ is empty then the surfaces are not defined at
$(s, \dot s)$.
\end{definition}

This definition does not use the maximal controls $\gamma$ and $\eta$
but directly uses the constraints~(\ref{eq:third}). The advantage is
that even in the presence of a constraint $k$ such that
\mbox{$a_k(s)=0$}, the surfaces are still well-defined.  Additionally,
on both surfaces, the maximum and minimum jerks are equal almost
everywhere except on singular curves (see Prop.~\ref{prop:1} in the
Appendix).

% \TODO{Show that, on the MiAS/MaAS, sdddot equals the maximum controls,
% and make the connection with the second-order case. Can be done here
% or in the appendix.}

\subsection{Characterizing third-order singularities}

\begin{definition}[Singular curve]
We say that the $k$-th constraint triggers a singularity at $s^*$ if
$a_k(s^*)=0$ and the set $C$ defined by
\begin{equation}
  \label{eq:feasible}
  \begin{aligned}
    C&:= \left\{ (s^*, \dot s, \ddot s) \ | \ \exists \dddot s : \right.\\
    &\left . 
    \begin{aligned}
      &a_k(s^*)\dddot s + b_k(s^*) \dot s \ddot s + c_k(s^*) \dot s^3 + d_k(s^*) = 0, \\
      &a_i(s^*)\dddot s + b_i(s^*) \dot s \ddot s + c_i(s^*) \dot
      s^3 + d_i(s^*) \leq 0, i \neq k \\
    \end{aligned} \right \}
  \end{aligned}
\end{equation}
is non-empty. We say $C$ is the singular curve at $s^*$.
\end{definition}

We show in the Appendix that all singular curves lie on either the
MaAS or MiAS (Prop.~\ref{prop:2}). Furthermore, a singular curve lies on the MaAS if
$b_k(s^*) > 0$ and on the MiAS if $b_k(s^*) < 0$. We shall also refer
to singular curves on MaAS and MiAS as maximum and minimum singular
curves respectively.

As in the second-order case, we can note the following behaviors:
\begin{itemize}
\item Forward integrations following maximum jerk and backward
  integrations following minimum jerk diverge when they approach a
  minimum singular curve;
\item Forward integrations following minimum jerk and backward
  integrations following maximum jerk diverge when they approach a
  maximum singular curve;
\item Forward and backward integrations following either maximum jerk
  or minimum jerk \emph{starting from} a feasible point at $s^*$ are
  not affected by the singular curves.
\end{itemize}
To understand these observations, we note that at 
fixed velocity $\dot s = \dot s_0$, constraints~(\ref{eq:feasible})
has the same form as second order-constraints
\begin{equation}
\label{eq:second}
  \vec a_{\mathrm{2nd}}(s) \ddot s + \vec b_{\mathrm{2nd}}(s) \dot s^2 
  + \vec c_{\mathrm{2nd}}(s) \leq 0,
\end{equation}
where $\vec a_{\mathrm{2nd}}, \vec b_{\mathrm{2nd}}, \vec
c_{\mathrm{2nd}}$ are the corresponding vectors. In the second-order
case, constraints~(\ref{eq:second}) cause integrations to diverge.
It therefore suggests that integrations
(third-order) projected on to a fixed velocity surface would likely
diverge, which is consistent with our observations. A more rigorous
analysis is left for future work.

% \TODO{Can you explain these behaviors?}

To compute a singular curve $C$ at $s^{*}$, we simply find the maximum
and minimum velocities $\dot s^*_{\max}, \dot s^*_{\min}$ of each
curve and compute $\ddot s^{*}$ by the equality constraint in
Eq.~\eqref{eq:feasible}. This procedure correctly returns $C$ because
a singular curve is connected (See Prop.~\ref{prop:connected} in the
Appendix).

To compute $\dot s^*_{\max}, \dot s^*_{\min}$ one needs to solve
a pair of linear programming problems
\[
\begin{aligned}
    &\mathrm{maximize} \; & & [0, 0, 1]^T [\dddot s, \dot s \ddot s, \dot s^3] \\
    &\mathrm{subject\;to} \; & & (\ref{eq:feasible})\\
    \text{and   }&\\
    &\mathrm{maximize} \; & & [0, 0, -1]^T [\dddot s, \dot s \ddot s, \dot s^3] \\
    &\mathrm{subject\;to} \; & & (\ref{eq:feasible}).\\
\end{aligned}
\]

% to find $\dot s^*_{\max}$ and $\dot s^*_{\min}$. Finally, the curve
% can be obtained from equation~(\ref{eq:equality}).

% \TODO{Explain what you are doing here? Why do you want to compute
%   $\dddot s_\mathrm{sglr}$? Note the subscript italics. If you want to
% compute the jerk to go through the singularity, then this should be in
% the next subsection.}

\subsection{Extending a profile through a singularity}

\begin{figure}[ht]
  \centering
  \begin{tikzpicture}
    \node[anchor=south west,inner sep=0] (image) at (0,0)
    {\includegraphics[width=0.35
      \textwidth]{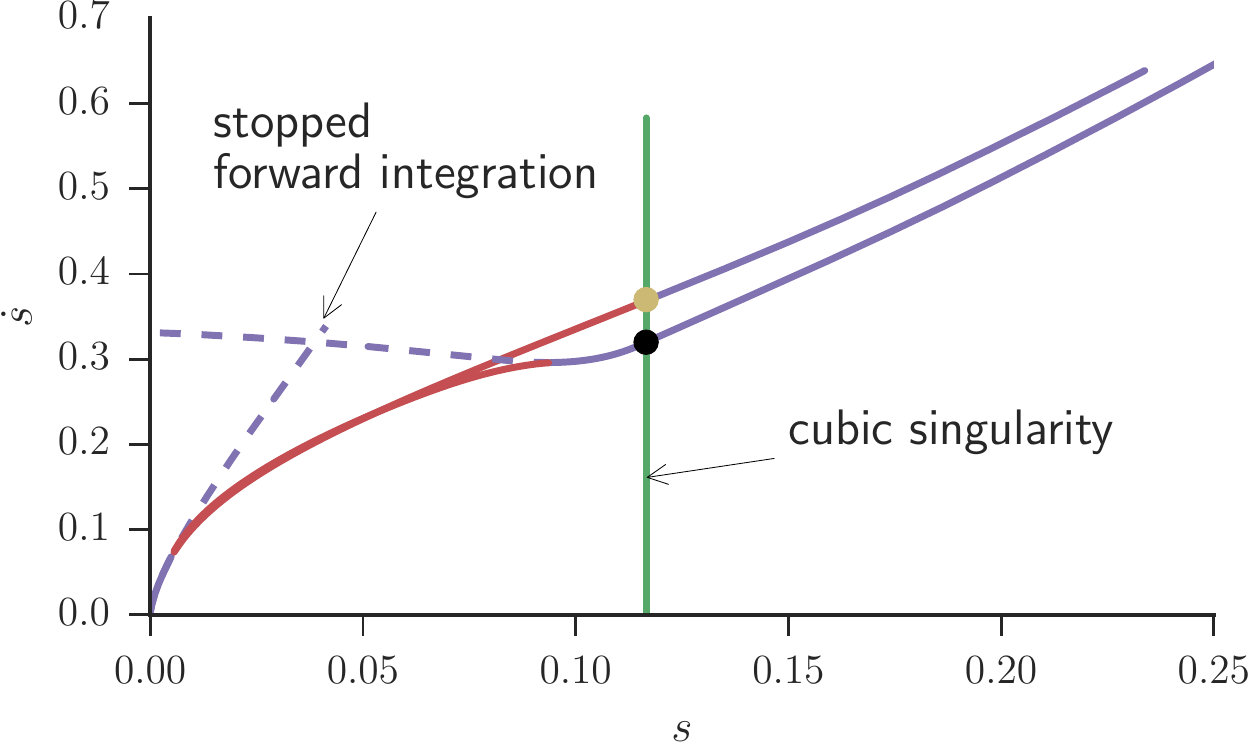}};
    \begin{scope}[x={(image.south east)},y={(image.north west)}]
      % \draw[help lines,xstep=.1,ystep=.1] (0,0) grid (1,1);
      % \foreach \x in {0,1,...,9} { \node [anchor=north] at (\x/10,0) {0.\x}; }
      % \foreach \y in {0,1,...,9} { \node [anchor=east] at (0,\y/10) {0.\y}; }
      \fill [fill=white] (0.15,0.7) rectangle (0.5,0.9);
      \fill [fill=white] (0.62,0.3) rectangle (0.9,0.5);
      \node[text width=2cm] at (0.8, 0.4) {\footnotesize Maximum singular curve};
      \node[text width=3cm] at (0.4, 0.8) {\footnotesize Stopped forward integration};
    \end{scope}
  \end{tikzpicture}
  \caption{ \label{fig:comparing-extension-type-I} Comparison of the
    conjectured time-optimal extension (via the yellow point) with an
    extension computed by picking a feasible point lying at $s^*$
    (black point) and uses it to extend the forward integration. Note
    that the profiles are projected onto the $(s, \dot s)$ plane.  }
\end{figure}

We now describe a method, termed \extend, to extend a forward maximum
jerk profile through a singularity $s^*$. As noted before, integrating
forward following maximum jerk \emph{from} $s^*$ is not problematic,
so the main difficulty consists of connecting backwards to the initial
forward profile.

Note first that there exists a \emph{family} of possible backward
connections, obtained as below
\begin{enumerate}
  \item choose a feasible point at $s^*$ (i.e. a point $(s^*,\dot
    s,\ddot s)$ for which there exists a feasible jerk);
  \item integrate backward following maximum jerk from this point;
  \item connect the original forward maximum jerk profile to the new
    backward maximum jerk profile by the \bridge procedure of
    Section~\ref{sec:connection}.
\end{enumerate}

We conjecture that, within this family of possible connections, the
\emph{optimal} connection has the following properties
\begin{enumerate}
\item the starting point at $s^*$ belongs to the \emph{maximum
  singular curve};
\item there is no backward maximum jerk profile: in other words, the
  connecting minimum jerk profile starts directly at $s^*$;
\item as a consequence, the min-to-max switch happens directly at
  $s^*$ and not before, as in the generic case above.
\end{enumerate}
See Fig.~\ref{fig:comparing-extension-type-I} for an illustration.

Accordingly, we propose the following strategy to perform the backward
connection using MSM. Using subscript $\A$ and $\C$ to denote the
forward profile and the singular curve, we define a solution
$\vec x \in \bbR^{2N+4}$ as
\[
  \vec x:=[\dot s_0, \ddot s_0,...,\dot s_N, \ddot s_N, s_{\A}, \dot s_{\C}], 
\]
where $\dot s_i$ and $\ddot s_i$ are the velocity and acceleration at
$s_i$, $s_{\A}$ and $\dot s_{\C}$ are the guessed starting position
and ending velocity on $\A$ and $\C$ respectively. This gives the
defect function
\begin{equation}
    \label{eq:sglr}
    F(\vec x) := 
    \begin{pmatrix}
        X(s_1, \dot s_1, \ddot s_1, s_0) - [\dot s_0, \ddot s_0]^T \\
        X(s_2, \dot s_2, \ddot s_2, s_1) - [\dot s_1, \ddot s_1]^T  \\
        ...\\
        X(s_{N}, \dot s_{N}, \ddot s_{N}, s_{N-1}) - [\dot s_{N-1}, \ddot s_{N-1}]^T \\
        r_{\A}(s_A) - [\dot s_0, \ddot s_0]^T \\
        \bar r_{\C}(\dot s_{\C}) - [\dot s_N, \ddot s_N]^T\\        
    \end{pmatrix},
\end{equation}
where $r_{\A}(s_{\A})$ and $\bar r_{\C}(s_{\C})$ give the velocity and
acceleration on the profile and the curve. Again, Newton's method can
be employed to find the root of~(\ref{eq:sglr}).

There exists an additional difficulty with respect to Section~III: the
optimal jerk is ill-defined on the singular curves because of a
division by zero: $a_k(s^*)=0$. Taking again inspiration
from~\cite{Pha14tro}, we can show that the optimal jerk on the
singular curves is in fact given by the following \emph{singular jerk}
\begin{equation}
    \label{eq:singular-jerk}
    \dddot s_\mathrm{sglr} = - \frac
    {d_k'(s^*) + c_k'(s^*) \dot s^3 + 3c_k(s^*)\dot s \ddot s + b(s^*) (\dot s \ddot s + \ddot s ^2 / \dot s)}
    {a_k'(s^*) + b_k(s^*)}.
\end{equation}
This expression can be derived similarly as
in~\cite{Pha14tro}.

Fig.~(\ref{fig:comparing-extension-type-I}) compares the conjectured
optimal connection with a generic connection. One can note that the
conjectured profile has a higher velocity at \emph{any position},
which implies time-optimality.

The case of a backward extension can be treated similarly.
\section{Numerical experiments}
\label{sec:experiments}

\subsection{Simulation results}

We implemented and tested TOPP3 on a random geometric path subjecting
to constraints on joint velocities, accelerations and jerks
(Table~\ref{tab:limits}). Implementation of acceleration and velocity
constraints follows~\cite{Pha14tro}.

\begin{table}
\caption{Kinematic limits for Denso robot arm}
\label{tab:limits}
\begin{tabular}{lrrrrrr}
\toprule
Limits                      & J1    & J2    & J3    & J4    & J5    & J6    \\
\midrule
Vel $(\SI{}{rad s^{-1}})$          & 3.92  & 2.61  & 2.85  & 3.92  & 3.02  & 6.58  \\
Accel $(\SI{}{rad s^{-2}})$     & 19.7 & 16.8 & 20.7 & 20.9 & 23.7 & 33.5 \\ 
Jerk  $(\SI{}{rad s^{-3}})$           & 100.   & 100.   & 100.   & 100.   & 100.   & 100.   \\ 
\bottomrule
\end{tabular}
\end{table}

Different scenarios were considered.  In the first one, we set
restrictive bounds on joint jerks at $\SI{100}{rad/s^3}$.  There were
several singularities which are plotted as green lines in
Fig.~\ref{fig:resulting-profile}.  Despite the existence of these
singularities, TOPP3 was able to compute the time-optimal
parametrization.  We note that TOPP3 extended the forward profile once
via the maximum singular curve on the left.

\begin{figure}[tb]
  \centering
  \begin{tikzpicture}
    \node[anchor=south west,inner sep=0] at (0,0)
         {\includegraphics[width=0.45\textwidth]{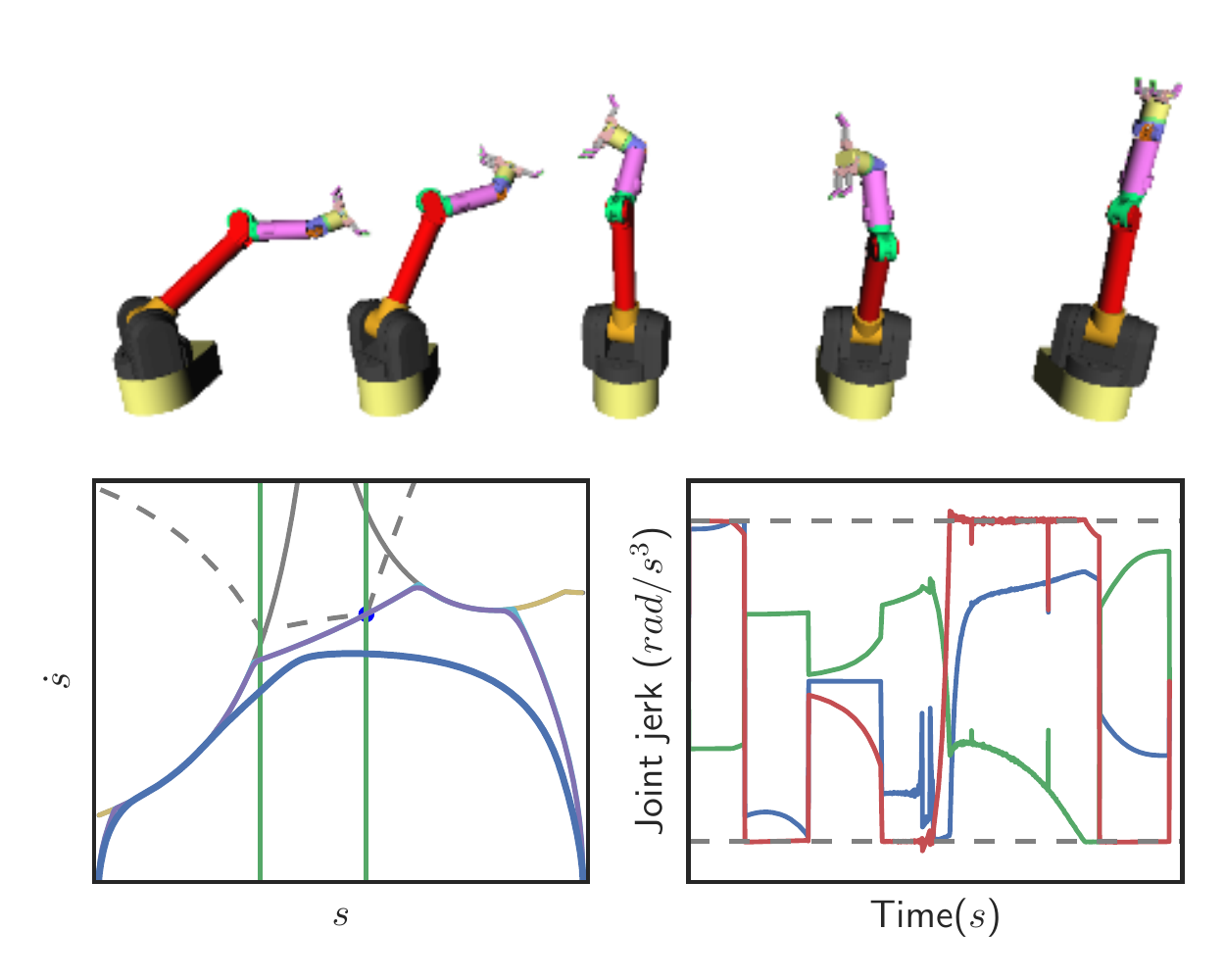}}; 
    \node at (1, 6) {\textbf{A}};
    \node at (1, 3.5) {\textbf{B}};
    \node at (5., 3.5) {\textbf{C}};
  \end{tikzpicture}
    % \includegraphics[width=0.5\textwidth]{figures/numerical-result-expm1.pdf}
    % \put(0,0){\includegraphics{....}}
    % \put(-240, 190){\textbf{A}}
    % \put(-240, 107){\textbf{B}}
    % \put(-115, 107){\textbf{C}}
  \caption{ \textbf{A}: Snapshots of the robot.  \textbf{B}: The
    profile constrained at $\SI{100}{rad/s^3}$ (blue), the profile
    constrained at $\SI{1000}{rad/s^3}$ (purple) and the profile with
    unconstrained jerk (light blue). Singularities are plotted as
    green lines. Note that the profiles are projected onto the
    $(s, \dot s)$ plane.  \textbf{C}: Joint jerks versus time.}
    \label{fig:resulting-profile}
\end{figure}

Next, we considered a more practical set of bounds at
$\SI{1000}{rad/s^3}$ on joint jerks.  In this case, TOPP3 also
terminated successfully. Moreover, we found that singularities did not
affect the profiles: the final profile was computed without having to
extend any profile via any singularities. 

Lastly, we compared these profiles with one that does not subject to
bounds on joint jerks (computed with the original TOPP algorithm).  We
observed that the profile bounded at $\SI{100}{rad/s^3}$ appears to be
a smoothed version of the profile bounded at $\SI{1000}{rad/s^3}$,
which in turn is a smoothed version of the one not subject to any jerk
bound (Fig.~\ref{fig:resulting-profile}).

Fig.~\ref{fig:resulting-trajectory} shows the joint trajectories
computed from these profiles. We note that the profiles are bang-bang,
satisfy all kinematic constraints and have different total duration.
In particular, they last $\SI{1.36}{sec}$, $\SI{1.19}{sec}$ and
$\SI{1.18}{sec}$. Note that, when there is no bound on jerk, the
latter reaches extremely high values
(Fig.~\ref{fig:resulting-trajectory}-C3).

\begin{figure*}[tb]
    \centering
    \begin{tikzpicture}
    \node[anchor=south west,inner sep=0] (image) at (0,0)
    {\includegraphics[width=0.95\textwidth]{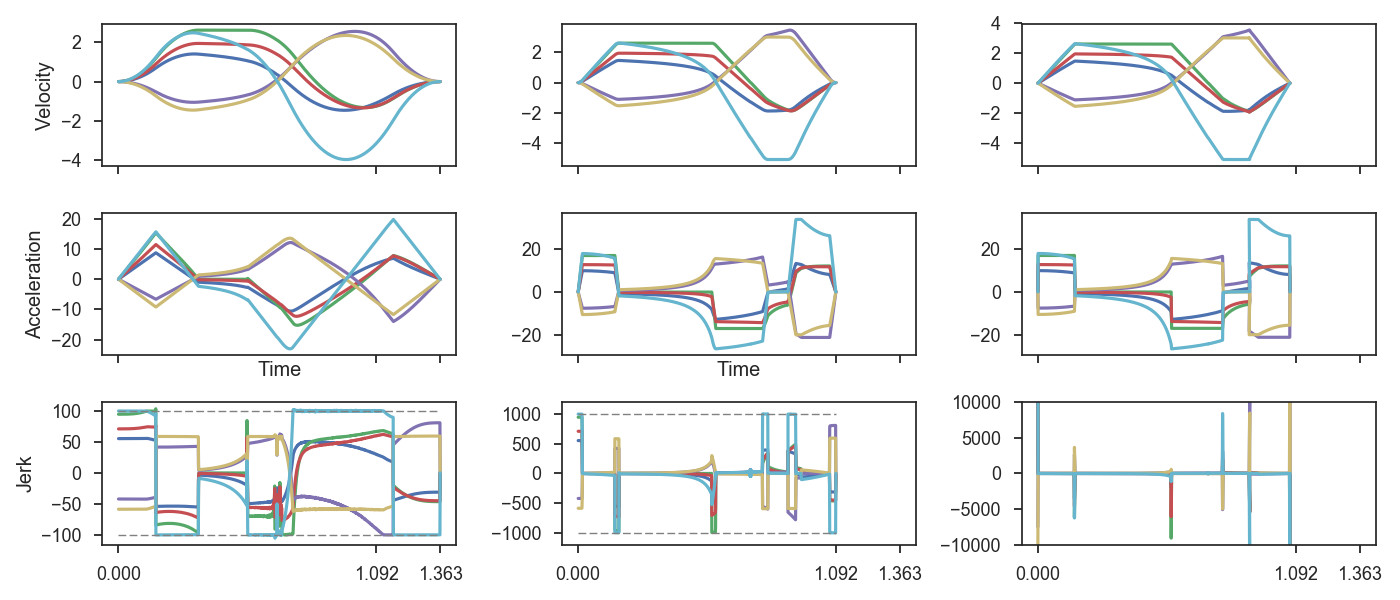}};
    \begin{scope}[x={(image.south east)},y={(image.north west)}]
      % \draw[help lines,xstep=.1,ystep=.1] (0,0) grid (1,1);
      % \foreach \x in {0,1,...,9} { \node [anchor=north] at (\x/10,0) {0.\x}; }
      % \foreach \y in {0,1,...,9} { \node [anchor=east] at (0,\y/10) {0.\y}; }
      \node at (0.31, 0.93) {\textbf{A1}}; 
      \node at (0.31, 0.61) {\textbf{A2}}; 
      \node at (0.31, 0.3) {\textbf{A3}}; 
      \node at (0.64, 0.93) {\textbf{B1}}; 
      \node at (0.64, 0.61) {\textbf{B2}}; 
      \node at (0.64, 0.3) {\textbf{B3}}; 
      \node at (0.96, 0.93) {\textbf{C1}}; 
      \node at (0.96, 0.61) {\textbf{C2}}; 
      \node at (0.96, 0.3) {\textbf{C3}}; 
    \end{scope}
    \end{tikzpicture}
    \caption{Comparing the resulting trajectories from TOPP3 with
      $\SI{100}{rad/s^3}$ bound on joint jerks (\textbf{A1-3}),
      $\SI{1000}{rad/s^3}$ bound on joint jerks (\textbf{B1-3}) and
      one from TOPP without any bound on jerk (\textbf{C1-3}). Note
      that, when there is no bound on jerk, the latter reaches
      extremely high values (C3).  }
    \label{fig:resulting-trajectory}
\end{figure*}

Table~\ref{tab:computation-time} reports computation time for the
three scenarios respectively. The experiments were ran on a single
core at $\SI{2.00}{GHz}$ and $\SI{8}{Gb}$ of memory. We remark that
the pre-processing -- computation of singularities and switch points
-- is written in Python and is significantly slower than the integration
and MSM procedures which are written in Cython.  Therefore, we
only compared the total time$^*$ which neglects pre-processing. In
this regard, computing the first and second profiles took 83 times and
19 times longer than the third one.

\begin{table}
\caption{Computation time for different levels of bound on joint jerks}
\label{tab:computation-time}
\centering
\begin{tabular}{llrrrrr}
    \toprule
  \multicolumn{2}{l}{Jerk bound $(\SI{}{rad/s^{-3}})$} & $100$	     & $1000$ & $\infty$    \\
    \midrule
  \multicolumn{2}{l}{Time step $(\SI{}{ms})$}          & 1           & 1      & 1           \\
  \multicolumn{2}{l}{Total $(\SI{}{ms})$}              & 754         & 181    & 72          \\
                                                       & \integrate  & 4.05   & 2.36 & 2.24 \\ 
                                                       & \bridge     & 364    & 42.2 & 0.3  \\ 
                                                       & \extend     & 134    & 73.2 & 3.50 \\
                                                       & Pre-process & 243    & 63.7 & 65.9 \\
  \multicolumn{2}{l}{Total*\footnotemark $(\SI{}{ms})$ }             
                                                       & 511         & 118    & 6.1         \\
                                                       &             & 83x    & 19x  & 1x   \\
    \bottomrule
\end{tabular}
\end{table}

\subsection{Singularities caused by third-order constraints}

Third-order singularities appear quite frequently, almost as frequent
as singular switch points which are many~\cite{Pha14tro}.  In the
Fig.~\ref{fig:resulting-profile}, one can see that the singular curves
(green lines) correspond to points where the MVC is continuous but is
not differentiable, which are singular switch points~\cite{Pha14tro}.

% This can be realized easily by noticing that the first coefficients of
% both second and third-order constraints contain a component of
% $\vec q_s$.

We observed that in the second scenario ($\SI{1000}{rad/s^3}$),
singularities did not affect integrations.  We found that this
phenomenon only appears when the bounds on joint jerks are high in
comparison with the bounds on joint accelerations.  Here is one
possible explanation: at high jerk bounds, the singular curves have
high acceleration.  However, as second-order constraints restrict
integrations from having high acceleration, the integrations will not
come close to these curves. Now, as analyzed, we observe that
integrations only diverge near to the singular curves; therefore,
these integrations are not affected.  The precise conditions at which
this happens is left for future work.
\footnotetext{Not including time to compute singularities and switch points.} 

% Lastly, in the main section, we omit the case when the singular curves
% do not satisfy second-order constraints to simplify the exposition.
% In order to address this case, one simply needs to project the
% singular curves onto the second-order constraints' boundary. \TODO{We
%   said previously that we don't consider second-order
%   constraints. Please double check in the whole manuscript for
%   consistency.}

\section{Conclusion}
\label{sec:discussion}

In this paper, we have studied the structure of the Time-Optimal Path
Parameterization (TOPP) problem with third-order constraints. We have
argued that the optimal profile can be obtained by integrating
alternatively maximum and minimum jerk, yielding a
max-min-max-\dots-min-max structure. We have identified two main
difficulties in the integration process: (i) how to smoothly connect
two maximum jerk profiles by a minimum jerk profile, and (ii) how to
extend the integration through singularities which, if not properly
addressed, would systematically cause integration failure. We proposed
some solutions to these two difficulties and, based on these
solutions, implemented an algorithm -- TOPP3 -- to solve the TOPP
problem with third-order constraints in a number of representative
scenarios.

There are still a number of open theoretical and practical questions,
which we are currently actively investigating. For instance,
\begin{enumerate}
\item Under which condition the profile returned by the connection
  procedure (\bridge) is unique? Alternatively, can we enumerate all
  connecting profiles?
\item Similarly, under which condition the profile returned by the
  extension procedure (\extend) is unique?
\item How to characterize and address the tangent switch points (which
  probably exist in the form of tangent \emph{curves})?
% \item It is known that the interaction between first- and second-order
%   constraints give rise to complex sliding and trap
%   behaviors~\cite{Zla96icra}. What will happen when first-, second-
%   and third-order constraints interact?
\end{enumerate}

Addressing all these (difficult) questions will enable us to implement
TOPP3 in a fast and robust manner, which in turn can be useful for a
wide range of robotics applications.

\appendix
\subsection{Some proofs regarding third-order singularities}

\begin{proposition}
  \label{prop:2}
  All singular curve lies on either the MaAS or the MiAS. Furthermore,
  a singular curve lies on the MaAS if $b_k(s^*) > 0$ and on the MiAS
  if $b_k(s^*) < 0$.
\end{proposition}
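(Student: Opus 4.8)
The plan is to exploit the fact that at a singular position $s^*$ the triggering constraint degenerates into a pure constraint on $(\dot s, \ddot s)$. Since $a_k(s^*)=0$, the $k$-th row of~\eqref{eq:third} reads
\[
  b_k(s^*)\,\dot s\,\ddot s + c_k(s^*)\,\dot s^3 + d_k(s^*) \leq 0,
\]
which no longer contains $\dddot s$. Consequently this inequality is a \emph{necessary} condition for membership in $\calF(s^*,\dot s)$: if $\ddot s\in\calF(s^*,\dot s)$, then some jerk witnesses feasibility, and since constraint $k$ is $\dddot s$-free it must hold for that $\ddot s$ independently of the witness.

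First I would solve this inequality for $\ddot s$, using that $\dot s>0$ on any profile (as $s$ is increasing). Dividing by $b_k(s^*)\,\dot s$ gives
\[
  \ddot s \leq -\frac{c_k(s^*)\,\dot s^3 + d_k(s^*)}{b_k(s^*)\,\dot s}
  \quad\text{if } b_k(s^*)>0,
\]
and the reversed inequality if $b_k(s^*)<0$. In either case the right-hand side is exactly the acceleration that makes constraint $k$ tight, i.e. the value prescribed by the equality defining the singular curve $C$ in~\eqref{eq:feasible}. Thus constraint $k$ bounds $\calF(s^*,\dot s)$ from above when $b_k(s^*)>0$ and from below when $b_k(s^*)<0$, and the singular curve lies precisely on that bound.

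Next I would confirm that this boundary value is actually \emph{attained}, i.e. that the singular point lies in $\calF(s^*,\dot s)$ and is therefore the genuine extremum. This is where the definition of $C$ does the work: membership in $C$ requires a jerk $\dddot s$ satisfying constraint $k$ with equality and every remaining constraint $i\neq k$ with inequality; that same $\dddot s$ certifies $\ddot s\in\calF(s^*,\dot s)$. Combining the two facts — the singular acceleration upper- (resp. lower-) bounds $\calF$, and it is achieved — yields $\ddot s=\MaAS(s^*,\dot s)$ when $b_k(s^*)>0$ and $\ddot s=\MiAS(s^*,\dot s)$ when $b_k(s^*)<0$, which is the claim.

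I expect the feasibility step to be the main obstacle: one must check that the witness guaranteed by the definition of $C$ is the same object that certifies membership in $\calF$, and that no remaining constraint with $a_i(s^*)=0$ for some $i\neq k$ silently tightens the bound and displaces the extremum off the curve. Once this bookkeeping between the two definitions is aligned, the rest is the elementary sign analysis above.
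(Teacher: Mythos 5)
Your proposal is correct and rests on the same key observation as the paper's own proof: at $s^*$ the $k$-th constraint loses its $\dddot s$ term and reduces to $b_k(s^*)\dot s\,\ddot s + c_k(s^*)\dot s^3 + d_k(s^*) \leq 0$, which bounds $\ddot s$ from above or below according to the sign of $b_k(s^*)$. The paper phrases this as a contradiction (a feasible $\ddot s$ strictly beyond the singular value would make the degenerate constraint strictly positive) whereas you argue directly and add an explicit attainment check, but the underlying argument is the same.
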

We assume $b_k(s^*) \neq 0$ since it is fairly rare for both
$a_k$ and $b_k$ to become zero.
\begin{proof}
  We will prove the first proposition by contradiction. Consider a
  singular curve $C$ triggered by the $k$-th constraint, there exists
  a point $(s^*, \dot s^*, \ddot s^*) \in C$ that do not lie on either
  the MaAS or MiAS. That is
  \[
    \MiAS(s^*, \dot s^*) < \ddot s^* < \MaAS(s^*, \dot
    s^*).
  \]
  It follows that we can always find $\ddot s_1$ and $\ddot s_2$ such
  that
  \[
    \MiAS(s^*, \dot s^*) < \ddot s_2 < \ddot s^* < \ddot s_1
    <\MaAS(s^*, \dot s^*).
  \]
  Now, by equation~(\ref{eq:feasible}), we have the equality
  \begin{equation}
    \label{eq:equality2}
    b_k(s^*)\dot s^* \ddot s^* + c_k(s^*)\dot s^{*3} + d_k(s^*) = 0,
  \end{equation}
  since $a_k(s^{*}) = 0$.  Now, if $b_k(s^*) > 0$, we replace
  $\ddot s^*$ with $\ddot s_1$ and notice that $\dot s > 0$ to see
  that equation~(\ref{eq:equality2}) is strictly positive. Similarly
  if $b_k(s^*) < 0$ then we replace $\ddot s^*$ with $\ddot s_2$ to
  see that equation~\eqref{eq:equality2} is strictly negative. Both
  are contradictions.  We neglect the case where $b_k(s^*)=0$.

  Finally, to prove the second proposition, we simply remark that if
  $b_k(s^*) > 0$, then there must not exists any feasible $\ddot s_1$
  that is greater than $\ddot s^*$. It therefore follows that
  \[
    \ddot s^* = \MaAS(s^*, \dot s^*).
  \]
  The case where $b_k(s^*) < 0$ is proven similarly.
\end{proof}

\begin{proposition}
  \label{prop:1}
The maximum and minimum jerks are equal almost everywhere on the MaAS and the MiAS
except on singular curves.
\end{proposition}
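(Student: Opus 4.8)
The plan is to exploit the fact that, at a fixed $(s,\dot s)$, the feasibility of a given acceleration $\ddot s$ is governed entirely by the controls $\gamma$ and $\eta$ whenever no constraint degenerates. Fixing $(s,\dot s)$ and reading the constraints~(\ref{eq:third}) as conditions on the scalar $\dddot s$, the set of admissible jerks is the intersection of the half-lines $\{a_i\dddot s \le -b_i\dot s\ddot s - c_i\dot s^3 - d_i\}$. When every $a_i(s)\neq 0$ this set is exactly the interval $[\gamma(s,\dot s,\ddot s),\,\eta(s,\dot s,\ddot s)]$, so $\ddot s\in\calF(s,\dot s)$ if and only if $\gamma\le\eta$. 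Since $\gamma$ is a finite max and $\eta$ a finite min of functions affine in $\ddot s$, $\gamma$ is convex, $\eta$ is concave, and $\eta-\gamma$ is concave in $\ddot s$; hence $\{\ddot s:\gamma\le\eta\}$ is a closed interval whose right and left endpoints are, by definition~(\ref{eq:MaAS-MiAS}), the MaAS and the MiAS.

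First I would show $\gamma=\eta$ at the right endpoint. Let $(s,\dot s)$ be a position at which all $a_i(s)\neq 0$, and put $\ddot s^\star=\mathrm{MaAS}(s,\dot s)$. Because $\ddot s^\star\in\calF(s,\dot s)$ we have $\gamma\le\eta$ there. Suppose for contradiction that $\gamma<\eta$ strictly at $\ddot s^\star$. As $\gamma$ and $\eta$ are continuous in $\ddot s$ (finite extrema of affine functions), the strict inequality persists on a neighbourhood, so some $\ddot s>\ddot s^\star$ still satisfies $\gamma\le\eta$ and therefore lies in $\calF(s,\dot s)$, contradicting the maximality of the MaAS. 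Hence $\gamma=\eta$ there, and the identical argument at the left endpoint gives $\gamma=\eta$ on the MiAS.

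It remains to identify the exceptional set. The previous step used crucially that every $a_i(s)\neq 0$; this fails precisely at the singular positions $s^\star$ where some $a_k(s^\star)=0$, and by Prop.~\ref{prop:2} these are exactly the positions carrying singular curves on the two surfaces. There the $k$-th constraint reduces to $b_k\dot s\ddot s + c_k\dot s^3 + d_k\le 0$, a direct bound on $\ddot s$ that does not constrain $\dddot s$; consequently the boundary of $\calF$ can be attained while the interval $[\gamma,\eta]$ computed from the remaining constraints is still nondegenerate, i.e. $\gamma<\eta$. This is the mechanism producing the exceptions. To conclude I would argue that the singular positions form a negligible subset: assuming the coefficient maps are smooth and not identically degenerate, each $a_k$ vanishes only on a measure-zero set of $s$, so the union of the singular curves $\{s=s^\star\}$ is measure zero in the two-dimensional surfaces parameterised by $(s,\dot s)$. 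Combined with the second step this yields $\gamma=\eta$ almost everywhere on the MaAS and the MiAS.

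The main obstacle I anticipate is not the generic touching argument but the careful bookkeeping of the degenerate cases. One must ensure the admissible jerk interval is genuinely bounded so that the MaAS and MiAS are finite endpoints (which can fail if, for instance, no constraint has $a_i>0$), and one must pin down the exceptional set rigorously, justifying the measure-zero claim and verifying, via Prop.~\ref{prop:2}, that an $a_k=0$ constraint active on the boundary is exactly what decouples the acceleration bound from the jerk interval. Handling these degeneracies cleanly, rather than the core convex/concave endpoint analysis, is where the real care is required.
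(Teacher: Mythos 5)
Your proposal is correct and rests on the same core fact as the paper's proof: at fixed $(s,\dot s)$ the constraints cut out a convex polygon in the $(\ddot s,\dddot s)$ plane, and the extremal $\ddot s$ is attained at a single vertex (hence with a unique feasible jerk) unless a constraint with $a_k=0$ contributes an edge on which $\ddot s$ is constant, which is exactly the singular-curve case. Your convex-$\gamma$/concave-$\eta$ endpoint argument is just the one-dimensional projection of that polygon picture, with the added (harmless, and slightly more careful) measure-zero bookkeeping for the exceptional positions.
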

\begin{proof}
Considering a point at $(s_0, \dot s_0, \ddot s_0)$ on the MaAS which does not 
lie on any singular curve, by definition, $\ddot s_0$ is the 
maximum acceleration for $s = s_0, \dot s=\dot s_0$ subjecting to
constraints~(\ref{eq:feasible}).
Since all constraints~(\ref{eq:feasible}) are linear the set of feasible
$(\ddot s, \dddot s)$ is a convex polygon on the plane.

Now, since this polygon does not contain any edge that is 
parallel to the $\ddot s$-axis ($(s_0, \dot s_0, \ddot s_0)$ does not lie on 
any singular curve), $\ddot s=\ddot s_0$ is maximized at a 
single vertex of the polygon. Therefore there is only one 
feasible jerk for $\ddot s = \ddot s_0$. 

A similar proof can be written for the MiAS.
\end{proof}

\begin{proposition}
\label{prop:connected}
Singular curves are connected.
\end{proposition}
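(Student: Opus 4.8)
The plan is to show that the \emph{parameter set} of a singular curve is an interval, by using the change of variable $u := \dot s^3$ to reduce the existence-of-$\dddot s$ condition to a convexity statement.

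First I would fix the triggering position $s^*$ and constraint index $k$, so that $a_k(s^*)=0$. Because the $\dddot s$-coefficient of the equality in~\eqref{eq:feasible} then vanishes, that equality collapses to $b_k(s^*)\dot s\ddot s + c_k(s^*)\dot s^3 + d_k(s^*) = 0$, which (invoking $b_k(s^*)\neq 0$ from Prop.~\ref{prop:2}, together with $\dot s > 0$) determines $\ddot s$ uniquely as a function of $\dot s$:
\[
  \ddot s(\dot s) = -\frac{c_k(s^*)\dot s^3 + d_k(s^*)}{b_k(s^*)\,\dot s}.
\]
Hence the singular curve is a graph over $\dot s$, and proving connectedness amounts to proving that the set of admissible $\dot s$ (those for which a feasible $\dddot s$ exists) is an interval.

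The key step is to substitute $\ddot s(\dot s)$ into the remaining inequalities of~\eqref{eq:feasible} (those with $i\neq k$) and set $u:=\dot s^3$. A direct computation gives $\dot s\,\ddot s(\dot s) = -(c_k(s^*)\dot s^3 + d_k(s^*))/b_k(s^*)$, so the quantity $b_i(s^*)\dot s\ddot s + c_i(s^*)\dot s^3 + d_i(s^*)$ becomes an affine function of $u$. Each inequality then splits into three cases: an affine-in-$u$ upper bound on $\dddot s$ when $a_i(s^*)>0$; an affine-in-$u$ lower bound on $\dddot s$ when $a_i(s^*)<0$; and, when $a_i(s^*)=0$, a $\dddot s$-free half-line constraint on $u$ alone. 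A feasible $\dddot s$ exists at a given $u$ precisely when $L(u)\le U(u)$, where $L(u)$ is the maximum of the affine lower bounds and $U(u)$ the minimum of the affine upper bounds, and simultaneously all $a_i(s^*)=0$ constraints hold. Now $L$, a maximum of affine functions, is convex, and $U$, a minimum of affine functions, is concave; so $U-L$ is concave and its super-level set $\{u : U(u)-L(u)\ge 0\}$ is convex. Intersecting with the (convex) half-lines from the $a_i(s^*)=0$ constraints and with $\{u>0\}$ leaves the admissible $u$-set convex, i.e.\ an interval. Since $u=\dot s^3$ is a homeomorphism of $(0,\infty)$, the admissible $\dot s$ form an interval, and the curve—being the image of this interval under the continuous map $\dot s\mapsto(s^*,\dot s,\ddot s(\dot s))$—is connected.

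The only delicate points are the bookkeeping of degenerate cases rather than any real obstacle: when no constraint has $a_i(s^*)>0$ (resp.\ $a_i(s^*)<0$) the bound $U$ (resp.\ $L$) is vacuous and the argument only simplifies, and the rare case $b_k(s^*)=0$ is excluded exactly as in Prop.~\ref{prop:2}. The one genuine idea is the $u=\dot s^3$ linearization, which turns an otherwise nonlinear feasibility region into a convex set; everything downstream is routine convexity.
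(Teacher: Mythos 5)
Your proof is correct, and it reaches the same topological punchline as the paper (continuous image of a connected set is connected), but the route there is genuinely more explicit and, in one place, more careful. The paper's proof takes the feasible set $\calA$ of~\eqref{eq:feasible} to be convex and maps it continuously onto $C$; as written, that convexity claim only holds after passing to the lifted coordinates $(\dddot s,\ \dot s\ddot s,\ \dot s^3)$, since the constraints are nonlinear in $(\dot s,\ddot s,\dddot s)$ themselves -- a point the paper leaves implicit. Your argument makes exactly this linearization explicit via $u=\dot s^3$, and then goes one step further: instead of working with the full lifted feasible set, you eliminate $\dddot s$ by forming the upper and lower affine envelopes $U(u)$ and $L(u)$ and observing that $\{u : U(u)-L(u)\ge 0\}$ is convex because $U-L$ is concave. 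This reduces everything to a one-dimensional interval statement and shows directly that $C$ is a graph over an interval of $\dot s$, which is a slightly stronger structural conclusion than bare connectedness. Both proofs need the same side conditions ($b_k(s^*)\neq 0$ and $\dot s>0$), which you invoke where the paper does. In short: same key idea (linearize the powers of $\dot s$, use convexity, push forward through a continuous map), but your decomposition is more elementary and closes a gap the paper glosses over.
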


\begin{proof}
Consider a singular curve $C$ in the $(s, \dot s, \ddot s)$ space triggered
by the $k$-th constraint. Let the convex set defined by the feasibility 
condition~(\ref{eq:feasible}) be $\calA$ and the mapping from $\calA$ using 
the equality 
$$b_k(s^*) \dot s \ddot s + c_k(s^*) \dot s^3 + d_k(s^*) = 0.$$
to the singular curve $C$ be $f$. By definition, $f(\calA)=C$.

Now, since $\calA$ is convex and thus connected and $f$ is also continuous
for $b_k(s^*)\neq 0, \dot s^*\neq 0$, using Theorem~4.22 in~\cite{rudin1964principles}, 
it follows that $C$ is connected.
\end{proof}

% \begin{proposition}
% Consider a singular curve $C$ triggered by the $k$-th constraint,
% we define the maximum and minimum velocities in $C$ as 
% $\dot s^*_{max}, \dot s^*_{min}$.
% The set of point $(s^*, \dot s^*, \ddot s^*)$ satisfying the equality 
% relation~(\ref{eq:equality}) and
% $\dot s^*_{min}\leq \dot s^* \leq \dot s^*_{max}$
% is precisely $C$. 
% % \TODO{Is this definition trivial by the
% %   connectedness of $C$. If so, it is not necessary. What do you need
% %   it for?}
% \end{proposition}
% \begin{proof}
% Directly implied from the connectedness of $C$.
% \end{proof}

\bibliographystyle{IEEEtran}
\bibliography{library}
%----------------------------------------------------------------------------------------
\end{document}